\documentclass[fleqn,10pt,twocolumn]{AROB-ISBC-SWARM23}

\usepackage{amsmath} 
\usepackage{amssymb}  
\usepackage{graphicx}
\usepackage{mathabx} 
\usepackage{caption}
\usepackage{subcaption}

\newcommand{\R}{{\mathbb{R}}}

\newcommand{\eg}{\emph{e.g.,\,}}
\newcommand{\ie}{\emph{i.e.,\,}}
\DeclareMathOperator*{\argmin}{arg\,min}
\newcommand{\sech}{\mathrm{sech}}
\newcommand{\ve}[2][]{{\mathbf{#2}}_{#1}}
\newcommand{\vd}[2][]{\dot{\mathbf{#2}}_{#1}}

\usepackage{amsthm}
\theoremstyle{plain}
\newtheorem{thm}{Theorem}[]
\newtheorem{definition}{Definition}[]
\newtheorem{remark}{Remark}[]
\newtheorem{lemma}{Lemma}[]

\allowdisplaybreaks

\title{tinySLAM-based exploration with a swarm of nano-UAVs}

\author{Mattias Vikgren${}^{\dagger,1}$ and Johan Markdahl${}^{1}$}
\speaker{Mattias Vikgren}

\affils{${}^{1}$The Swedish Defence Research Agency, Stockholm, Sweden\\
(E-mail: mattias.vikgren@foi.se, johan.markdahl@foi.se)}
\abstract{%
This paper concerns SLAM and exploration for a swarm of nano-UAVs. The laser range finder-based tinySLAM algorithm is used to build maps of the environment. The maps are synchronized using an iterative closest point algorithm. The UAVs then explore the map by steering to points selected by a modified dynamic coverage algorithm, for which we prove a stability result. Both algorithms inform each other, allowing the UAVs to map out new areas of the environment and move into them for exploration. Experimental findings using the nano-UAV Crazyflie 2.1 platform are presented. A key challenge is to implement all algorithms on the hardware limited experimental platform. }

\keywords{%
Swarm robotics, multi-agent systems, SLAM, exploration, nano-UAV, Crazyflie.
}

\begin{document}

\maketitle


\section{INTRODUCTION}

Autonomous swarms of flying robots is an emerging technology with promising potential. There is an advancing trend towards smaller robots due to developments in hardware. In particular, nano-robots are well-suited to travel in narrow and cluttered indoor environments. They are also safe to operate around humans. Applications that leverage these strengths include tasks like surveillance \cite{saska2016swarm}, search-and-rescue operations \cite{mcguire2019minimal}, and gas leak localization \cite{duisterhof2021sniffy}. To realize such tasks in autonomous swarms, a number of control algorithms must be employed, including GPS-denied localization, mapping, and exploration. In this paper we demonstrate such capabilities in experiments with a swarm of Crazyflie 2.1 which are unmanned aerial vehicles (UAVs)  of size 27 g, 1 dm$^2$.

We couple a low memory and low computational complexity laser range finder (LRF) based simultaneous localization and mapping (SLAM) algorithm \cite{steux2010tinySLAM} with a modified swarm exploration algorithm \cite{ganganath2016distributed} and implement them on-board the hardware limited Crazyflie 2.1 nano-UAV platform. The two algorithms synergize; the SLAM maps the environment and the exploration moves the swarm to previously occluded areas, which the SLAM algorithm can survey. This behavior is realized in an experiment in two adjacent rooms. These capabilities have not previously been demonstrated using on-board calculations in a nano-UAV swarm without an external positioning system \cite{coppola2020survey}.

\subsection{Literature review}

The survey paper \cite{coppola2020survey} concerns swarming with micro air vehicles. A key distinction is made between centralized control \emph{vs.} distributed control. Likewise there is a difference between position sensing that rely on an external system such as GPS or a motion caption system \emph{vs.} on-board sensing. The case of distributed control using only on-board sensing is the most difficult, with state-of-the-art number of swarm members being in the single digits \cite{coppola2020survey}.

Low complexity distributed control/on-board sensing algorithms used for maze solving with tiny robots are referred to as bug algorithms \cite{mcguire2019comparative}. The work \cite{mcguire2019minimal} implements a wall following bug algorithm with a swarm of Crazyflie 2.0 UAVs. The UAVs travel through a complex environment along their preferential direction and later return to their departure point by following a beacon. The work \cite{duisterhof2021sniffy} uses a smell sensor on a swarm of Crazyflie 2.1 UAVs to detect and localize gas leaks. Compared to \cite{mcguire2019minimal,duisterhof2021sniffy} we implement tweaked, light versions of more complex algorithms on the same Crazyflie hardware. 

The area coverage task requires robots to move their sensors to cover an area of interest. Individual coverage algorithms have a long history (see \eg \cite{gabriely2001spanning} and the survey \cite{galceran2013survey}), and there also exist swarm versions (\cite{hazon2005redundancy} is the multi-robot version of \cite{gabriely2001spanning}). A classic exploration algorithm is the mutual information based work \cite{burgard2005coordinated}. That work does not consider flying robots but is still similar in scope to our setup. An algorithm for persistent exploration is given in \cite{ganganath2016distributed}. We prefer a modified version of the algorithm \cite{ganganath2016distributed} to more systematic algorithms like \cite{hazon2005redundancy}. This is due to the algorithm   \cite{ganganath2016distributed} being a feedback law whereas \cite{hazon2005redundancy} is open loop. Moreover, in certain scenarios it can be preferable that the algorithm in \cite{ganganath2016distributed} is unpredictable, as that makes it more difficult for mobile adversarial agents to avoid discovery. 

\begin{figure}[!hbt]
    \centering
    \includegraphics[width=0.25\textwidth,trim=0 1cm 0 1cm, clip]{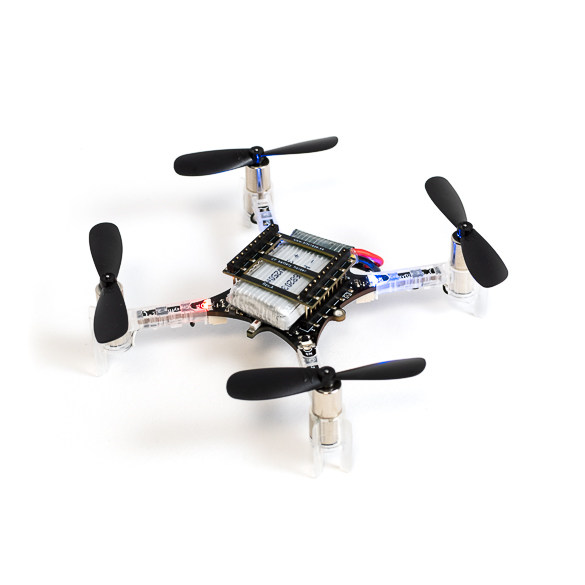}
    \caption{The Crazyflie 2.1. It weighs $27$ g and measures $92 \textrm{ mm}\times92 \textrm{ mm}\times29\textrm{ mm}$.} 
    \label{fig:crazyflie}
\end{figure}

\subsection{Contribution}

The main contribution of this paper is to couple the tiny-SLAM algorithm \cite{steux2010tinySLAM} with an  exploration algorithm \cite{ganganath2016distributed} such that the two algorithms inform each other about the environment that a swarm of nano-UAVs operates in. Distributed algorithms of this complexity that only use on-board calculations and sensing have not previously been implemented on nano-UAV platforms. There are examples of complex swarm algorithms being executed on the Crazyflie 2.1 nano-UAV, see \eg \cite{soria2021predictive}, but they rely on a motion capture system and model predictive control calculations done on a remote computer. We also modify the exploration algorithm \cite{ganganath2016distributed} and prove a stability result for our version.

\section{Model}


All UAVs are quadrotors with planar position and orientation states $(\ve[i]{x},\theta_i)\in{\R}^2\times(-\pi,\pi]$, where $i\in\{1,\ldots,N\}$ refers to the $i$th UAV. A Crazyflie 2.1 can be actuated by position and velocity commands. We mainly use velocity commands, \ie we assign $\vd[i]{x}=\ve[i,x]{u}$ and $\dot{\theta}_i=u_{i,\theta}$. The input signals are saturated, $\|\ve[i,x]{u}\|\leq S_x,\,|u_{i,\theta}|\leq S_\theta$, as a safety measure and to prevent resetting of the Crazyflie platform (in the experiments, we observe that the extended Kalman filter in a Crazyflie is reset to zero if the states change quickly).

The camera's field of view is identified with the forward direction in the UAV's body fixed frame, see Fig. \ref{fig:camera}. The coordinates of the set of points corresponding to the  camera image is a rectangular area centered around a point $\ve[i,c]{x}=\ve[i]{x}+r_c[\cos\theta_i\,\sin\theta_i]^T$, where $r_c$ is a camera parameter. The width of the focus area is $w$ and the length of the focus area is $l$. The field of view is 
${\mathcal{V}}_i=\{\ve{y}\in\R^2:\,|\langle \ve{y}-\ve[i,c]{x},\ve[i,1]{v}\rangle|\leq w/2,
|\langle \ve{y}-\ve[i,c]{x},\ve[i,2]{v}\rangle|\leq l/2\},$ 
where 
$\ve[i,1]{v}=[
\cos\theta_i\,
\sin\theta_i]^T$, 
$\ve[i,2]{v}=[
-\sin\theta_i\,
\cos\theta_i]^T$.
Note that the cameras are used for exploration, but not for SLAM.

\section{Collision avoidance}

Each Crazyflie 2.1 is equipped with six LRFs that are used to detect obstacles, walls, the ceiling, and the floor. Four are directed to the front, back, left, and right of the UAV. The UAV is programmed to move away from any obstacle it detects when the distance to the obstacle is lower than a preset threshold. Because the UAVs are controlled to move with the camera sensor directed forward, the front LRF detects any obstacle in its path. This type of naive collision avoidance algorithm may lead to UAVs getting stuck, which can be avoided by path planning or by only moving towards points that are in the visible line of sight.

The multi-agent collision avoidance we use in this paper is similar to the potential-field based method in \cite{ganganath2016distributed}, and indeed in many of other work, see \eg  \cite{olfati2006flocking}. The key difference is that rather than assuming that all agents can sense the relative localization $\ve[i]{x}-\ve[j]{x}$, each agent broadcasts its position $\ve[i]{x}$ in a map they have built using SLAM. This requires the maps to be synchronized, which we achieve by matching point clouds using an iterative closest point algorithm. The knowledge of $\ve[i]{x}$ and $\ve[j]{x}$ allows an agent to calculate  $\ve[i]{x}-\ve[j]{x}$, which is used for collision avoidance and swarm algorithms.

\section{SLAM}

The Crazyflie 2.1 platform restricts the choice of SLAM implementation due to its severe limitations in terms of computational power, sensory information, and  memory. Therefore, the tinySLAM algorithm \cite{steux2010tinySLAM} is chosen for its lightweight implementation and compatibility with LRF sensors. Due to the restrictions on the available computational resources, the algorithm is used without the recommended particle filter as a single hypothesis SLAM method.

\begin{figure}
    \centering
    \includegraphics[width=0.5\textwidth]{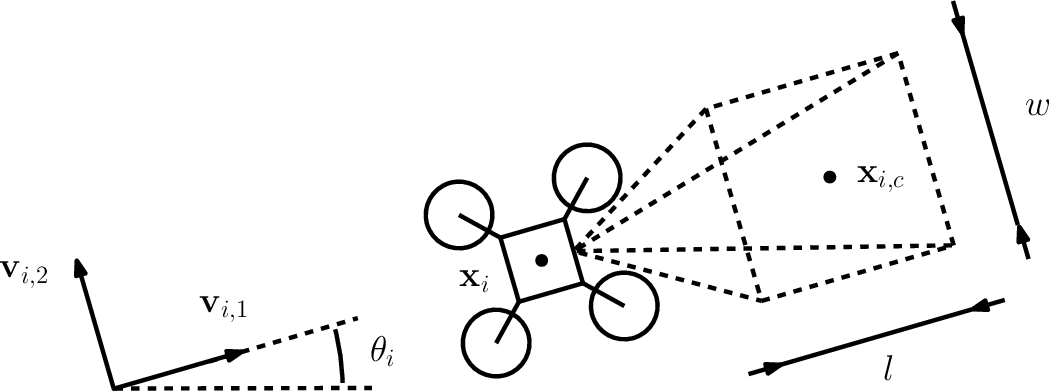}
    \caption{Camera model. The vectors $\ve[i,1]{v}$ and $\ve[i,2]{v}$ are the basis vectors of the $i$th body frame  expressed in the world frame which is aligned with the horizontal and vertical axes.}
    \label{fig:camera}
\end{figure}

The tinySLAM map representation has been adopted to the Crazyflie 2.1 system. The SLAM map $\ve[i]{n}$ of UAV $i$ is of occupancy grid type and discretizes the environment into equally sized square cells, $10\textrm{ cm}\times10\textrm{ cm}$ in size. Each cell is associated with an 8-bit integer $(\ve[i]{n})_{ab}$, \ie a number in the range $0$ to  $255$, which represents a degree of confidence in whether a cell is occupied by an obstacle or not. The number $0$ represents occupied and $255$ free. The cells are initialized at $127$, representing uncertainty about their state. The map is a $100\times100$ matrix, occupying $10^4$ out of a total of $5\cdot10^4$ bytes in the Crazyflie 2.1 memory.

The map update is based on Bresenham's line algorithm but has been modified by the creator of the tinySLAM algorithm to facilitate faster convergence of the scan-matching method used for localization \cite{steux2010tinySLAM}. The main modification is what the authors describe as the `hole function', that blurs measured obstacles to the neighboring cells in the direction of the scan. This gives the walls a thickness larger than one cell. The update of the cell values can be tuned using a confidence parameter $\alpha\in\{0, \ldots,255\}$ and a parameter that sets the hole width. The confidence parameter $\alpha$ defines how the update of each cell weighs the state of the existing map compared to the new measurement $y\in\{0,255\}$ as
\begin{align}
(\ve[i]{n})_{ab}(k+1) = ((255 - \alpha)(\ve[i]{n})_{ab}(k) + \alpha y)/255.
\label{eq:map_update}
\end{align}

The tinySLAM algorithm utilizes a Monte Carlo scan matching localization method. The scan matching compares the latest laser scan to the current map in different positions decided by a random walk starting in the current estimated position. The position that the scan matcher finds to best fit to the map is fed into the extended Kalman filter used for sensor fusion and state estimation of the UAV.

\section{Relative localization}
Multi-agent systems require relative localization for data fusion and collision avoidance between the agents. Our method for relative localization is based on an iterative closest point (ICP) algorithm commonly used for map merging or scan matching in multi-agent SLAM systems \cite{sobreira2019Map-matching}.  The ICP algorithm is an iterative point registry method which tries to find the rigid transformation that minimizes the Euclidean distance between two point sets with unknown correspondence. The two point sets correspond to obstacle coordinates in two maps $\ve[i]{n}$ and $\ve[j]{n}$ created by two different UAVs. The ICP algorithm provides an estimate of the transformation between the two agent’s local maps. It is used to transform the locally estimated position being communicated between the agents for relative localization. 

Due to the limited communication capabilities and memory available on the Crazyflie 2.1 we suggest an approach for selecting a few dozen points representing the obstacles in map $\ve[i]{n}$ created by the SLAM algorithm, rather than using all points marked as obstacles in the map. These points are extracted by iterating over the map to find obstacle cells in the map. The coordinates of the obstacle cell is added to the point set if there are no points already added to the set within a certain distance threshold. The threshold has to be tuned to find equally spaced points covering the entire area that has been mapped by the SLAM algorithm without consuming too much memory. 

The transform is found through a matching and transformation stage, which is repeated until convergence. The matching stage is started with finding the correspondence between the two point sets. Each point $\ve[i,k]{p}$ in the set ${\mathcal{S}}_i$ of UAV $i$ is associated with the point $\ve[j,l]{q}$ in the set ${\mathcal{S}}_j$ of UAV $j$ having the minimum Euclidean distance  $\|\ve[i,k]{p}-\ve[j,l]{q}\|$. If the distance is larger than a certain tolerance, then the point is ignored and it is seen as an outlier in this iteration. 

The transformation stage is based on a least square solver to find the optimal transformation between the two point sets. First calculate the centroids $\widebar{\ve{p}}_i$ and $\widebar{\ve{q}}_j$ 
\begin{align*}
\widebar{\ve{p}}_i = \frac{1}{K} \sum_{k=1}^K\ve[i,k]{p},\quad
\widebar{\ve{q}}_j = \frac{1}{K} \sum_{l=1}^K\ve[j,l]{q},
\end{align*}
and the distance of each point to the centroid, $\ve[i,k]{p}^\prime$ and $\ve[j,l]{q}^\prime$,
\begin{align*}
\ve[i,k]{p}^{\prime} = \ve[i,k]{p} - \widebar{\ve{p}}_i,\quad
\ve[j,l]{q}^{\prime} = \ve[j,l]{q} - \widebar{\ve{q}}_j.
\end{align*}
Use the distances to the centroids to form matrices
\begin{align*}
\ve[i]{P}^{\prime}=\begin{bmatrix}
\ve[i,1]{p}^\prime\,
\ve[i,2]{p}^\prime\,
\ldots
\ve[i,K]{p}^\prime
\end{bmatrix},\quad
\ve[j]{Q}^\prime=\begin{bmatrix}
\ve[j,1]{q}^\prime\,
\ve[j,2]{q}^\prime\,
\ldots
\ve[j,K]{q}^\prime
\end{bmatrix}.
\end{align*}
Then calculate the cross-variance matrix $\ve[ij]{H}$,
\begin{align*}
\ve[ij]{H} = \ve[i]{´P}^{\prime} \ve[j]{Q}^{\prime\mathsf{T}}\in \R^{2\times 2}.
\end{align*}
The matrix $\ve[ij]{H}$ yields the rotation angle $\vartheta_{ij}$ as
\begin{align*}
\vartheta_{ij} = \mathrm{atan}2((\ve[ij]{H})_{12}-(\ve[ij]{H})_{21}, (\ve[ij]{H})_{11}+(\ve[ij]{H})_{22}).
\end{align*}
Now the points can be transformed using the rotation matrix $\ve{R}(\vartheta_{ij})$ and the translational shift $\ve[ij]{t}$, where
\begin{align*}
\ve{R}(\vartheta_{ij}) = 
\begin{bmatrix}
\cos(\vartheta_{ij})     &-\sin(\vartheta_{ij})\\
\sin(\vartheta_{ij})     &\cos(\vartheta_{ij})
\end{bmatrix}, \ve[ij]{t} =  \widebar{\ve{q}}_{j} - \ve{R}(\vartheta_{ij})\widebar{\ve{p}}_{i}.
\end{align*} 
The rotation matrix $\ve{R}(\vartheta_{ij})$ and translation $\ve[ij]{t}$ is applied to points in set ${\mathcal S}_j$ and the next iteration starts from there. 

Each UAV in the swarm has a predetermined neighbor with which it is supposed to synchronize its map. The UAVs that synchronize their maps are chosen such that a chain of transformations are created and communicated. This enables all UAVs to transform coordinates from all other UAVs to its own map coordinates. Note that this allows for a distributed solution, without all-to-all communication. Moreover, the pairs of communicating neighbors could be determined dynamically based on current relative distances rather be predetermined.

\section{Exploration control design}

Part of the exploration control design is taken from the dynamic coverage algorithm \cite{ganganath2016distributed}. However, we modify it for use on a low memory, low computational resources hardware. Moreover, we add an orientation state and design the control law to couple the position and orientation errors.

\subsection{The camera coverage map}


Each robot has two maps of its own. The map $\ve[i]{n}$ created by the SLAM algorithm keeps track of obstacles and walls in the environment. A second map $\ve[i]{m}$ is used to keep track of when points in the environment were last seen by the on-board cameras. The camera coverage map $\ve[i]{m}$ is an $A\times B$ matrix where each element $(\ve[i]{m})_{ab}$ is an 8-bit integer, \ie $\ve[i]{m}\in\{0,1,\ldots,255\}^{A\times B}$.
%

%
%

A discrete time-step $k\in\{1,\ldots,255\}$ is used to time-stamp the map. Each time step has a length $\Delta t$. Let $t$ denote the current time. For $t\in[\Delta t k,\Delta t(k+1))$, the map is updated as $(\ve[i]{m})=k$ for all $\ve{y}$ in the field of view ${\mathcal{V}}_i$. The map $\ve[i]{m}$ can be interpreted as a landscape of time-stamps that keeps track of the age of information. This limits the total time of exploration to $255\Delta t$, but with $\Delta t\approx1$ that is still larger than the battery allows.

%
%

Our approach to exploration is based on \cite{ganganath2016distributed}. However, their algorithm is not designed for nano-UAVs. In particular, they require any pair of agents $(i,j)$ that are sufficiently close to one-another to exchange their maps $\ve[i]{m}$ and $\ve[j]{m}$. The Crazyflie 2.1 communication bandwidth is too limited for this. Instead, we assume that agents exchange only their current position and orientation in a local map. This suffices for collision avoidance and agent $i$ can timestamp its map $\ve[i]{m}$ based on receiving $(\ve[j]{x},\theta_j)$ for all $j$ (or just a subset of neighbors for a distributed solution).

\subsection{Position control design}

The exploration algorithm strives to make the UAV see its desired camera point $\ve[i,c,d]{x}\in{\R}^2$ inside its field of view ${\mathcal{V}}_i$. This is done by solving an optimization problem using the brute force method. The calculation requires  ${\mathcal{O}}(AB)$ operations, where $A\times B$ is the dimension of the map $\ve[i]{m}$. This is the most expensive calculation of our algorithm, but it is only done infrequently when updating $\ve[i,c,d]{x}$ as it enters the field of view ${\mathcal V}_i$. 

As detailed in \cite{ganganath2016distributed}, the next desired camera point $\ve[i,c,d]{x}$ is calculated as
\begin{align}
\ve[i,c,d]{x}&=\argmin_{\ve{y}}(t-(\ve[i]{m})_{ab})(\lambda+(1-\lambda)f_i(\ve{y})),\label{eq:argmin}\\
f_i(\ve{y})&=\exp(-\sigma_1\|\ve{y}-\ve[i,c]{x}\|-\sigma_2\|\ve{y}-\ve[i,c,d]{x}^\prime\|).\nonumber
\end{align}
The function $f_i$ is used to penalize points $\ve{y}$ for being either too far from the UAV's current camera position $\ve[i,c]{x}$ or too far from the previous desired camera point $\ve[i,c,d]{x}^\prime$. The parameter $\lambda\in[0,1]$ determines if the UAVs travel as a group or spread out. In particular, for $\lambda=1$, all the UAVs will calculate the same next desired camera point. The current desired UAV orientation $\theta_{i,d}$ is calculated based on $\ve[i,c]{x}$ and $\ve[i,c,d]{x}$.

The camera position error, heading error, exploration control law, and closed loop system are defined by
\begin{align}
\ve[i,c]{e}:\!&=\ve[i,c,d]{x}-\ve[i,c]{x}=\ve[i,c,d]{x}-\ve[i]{x}-r_c\ve[i,1]{v},\nonumber\\
e_{i,\theta}:\!&=\theta_{i,d}-\theta_i\in(-\pi,\pi],\nonumber\\
\ve[i]{u}:\!&=k_{c}\,{\sech}^2(k_{s}e_{i,\theta})\tanh(k_t\|\ve[i,c]{e}\|)\,\ve[i,c]{e}/\|\ve[i,c]{e}\|,\label{eq:modified}\\
\vd[i]{x}&=\ve[i]{u},\nonumber\\
\vd[i,c]{e}&=-\ve[i]{u}-\dot{\theta}_ir_c\ve[i,2]{v},\nonumber
\end{align}
where $k_t,k_s,k_c\in(0,\infty)$. The control $\ve[i,c]{u}$ is designed to steer the center of its camera field of view, $\ve[i,c]{x}$, to the desired point $\ve[i,c,d]{x}$. The unit vector and $\tanh$ function make $k_c$ easier to tune. The UAVs should rotate before they translate. This allows the camera and LRF to align with the direction of motion. Note that this also excludes the use of a feedforward term in $\ve[i]{u}$ to cancel the $\ve[i,2]{v}$ term in $\vd[i,c]{e}$. To achieve fast convergence to $\theta_d$, we could use a high-gain control law. However, we observe that high angular velocities lead to resetting of the Kalman filter of the Crazyflie 2.1, which interferes with the SLAM algorithm. We hence choose a control design that couples the position and heading errors, where the $\sech$ function is shaped like a Gaussian and makes translational velocities small when $k_s|e_{i,\theta}|\gg0$. 

The control \eqref{eq:modified} differs from that in \cite{ganganath2016distributed} in that it couples the position and orientation error (note that there is no orientation state in \cite{ganganath2016distributed}). To gain intuition for our approach, we could consider a control action that consists of two separate maneuvers. First, a rotation to align the UAV orientation $\ve[i,1]{v}$ with the line that connects $\ve[i,c]{x}$ and $\ve[i,c,d]{x}$. Second, a translation that takes $\ve[i,c]{x}$ to $\ve[i,c,d]{x}$. However, that sequence of maneuvers would be an open loop control which would perform poorly due to errors and disturbances. Our feedback control \eqref{eq:modified} fuses these two separate maneuvers into a single smooth motion. The notion of two separate motions is still helpful; it is also the intuition for the proof of our main theoretical result, Theorem \ref{th:main} in the next section.


\subsection{Heading control design}
\label{sec:theta}
The heading $\theta_i\in(-\pi,\pi]$ or yaw angle of the $i$th UAV is controlled to align with the desired direction of travel $\theta_{i,d}={\mathrm{atan}}2((\ve[i,c]{e})_2,(\ve[i,c]{e})_1)$. The Crazyflie 2.1 platform allows control by  angular velocity commands, $\dot{\theta}_i=u_{i,\theta}$, where $u_{i,\theta}\in{\R}$ is the input. The heading error is 
\begin{align*}
e_{i,\theta}=\theta_{i,d}-\theta_i\in(-\pi,\pi].
\end{align*}

The heading control and closed-loop system is given by
\begin{align}
u_{i,\theta}:=k_{\theta}\frac{\sin e_{i,\theta}}{\sqrt{1+\cos e_{i,\theta}}},\quad
\dot{\theta}_i=u_{i,\theta}.\label{eq:utheta}
\end{align}
The trigonometric functions $\sin$ and $\cos$ remove the discontinuity due to $e_{i,\theta}\in(-\pi,\pi]$. The control \eqref{eq:utheta} is the ${\mathsf{SO}}(2)$ version of the large angle control law on ${\mathsf{SO}}(3)$ proposed in \cite{lee2012exponential}. It achieves its highest gain at $e_{i,\theta}\approx\pi$, leading to fast convergence in the case of large errors. This is desirable since our position control is slow for large rotation errors $e_{i,\theta}$. Note that the control \eqref{eq:utheta} is discontinuous at $e_{i,\theta}=\pi$.

\section{Main results}

\begin{figure*}[h!]
\centering
\includegraphics[trim=3cm 5cm 3cm 5cm, clip,width=1\linewidth]{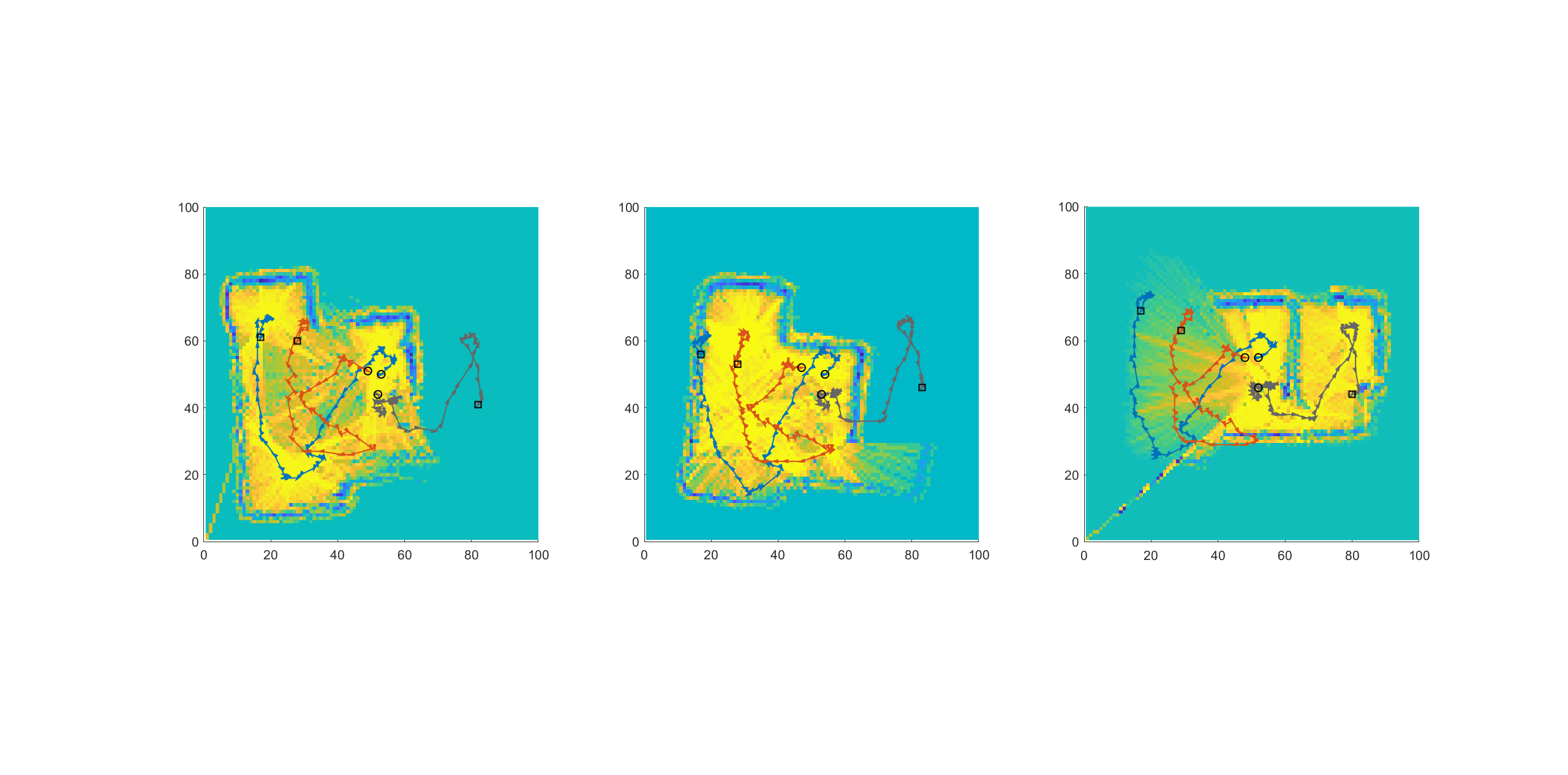}
 \caption{Local views of the environment given by the SLAM maps $\ve[i]{n}$ (heatmaps) and the three UAV position and orientation states (triangles and colored paths). The local perspectives are those of the blue UAV (left), the red UAV (middle) and the gray UAV (right) as can be inferred from the yellow areas of obstacle free grid points surrounding their trajectories. The initial positions of all UAVs are given by black circles and the final position by black squares.}
\label{fig:slam}
\end{figure*}

The results of this paper are: a theoretical proof of stability of the modified exploration algorithm based on \cite{ganganath2016distributed} and an experiment with a nano-UAV swarm with the Crazyflie 2.1 platform. Note that we have run many more experiments than those displayed here. We strive to provide results that guide intuition rather than quantitative characterization.

\subsection{Stability analysis}

A number of theoretical results about the control algorithm on which our approach is based are given in \cite{ganganath2016distributed}. We extend the algorithm of \cite{ganganath2016distributed} to include control of the UAV orientation. The dynamics of the error terms are given by
\begin{align}
\vd[i,c]{e}={}&-k_{c}\,{\sech}^2(k_{s}e_{i,\theta})\tanh(k_t\|\ve[i,c]{e}\|)\,\ve[i,c]{e}/\|\ve[i,c]{e}\|+\nonumber\\
{}&-k_\theta r_c\frac{\sin e_{i,\theta}}{\sqrt{1+\cos e_{i,\theta}}}\begin{bmatrix}
-\sin (\theta_{i,d}-e_{i,\theta})\\
\cos (\theta_{i,d}-e_{i,\theta})
\end{bmatrix},\label{eq:errors}\\
\dot{e}_{i,\theta}={}&-k_\theta\sin e_{i,\theta}/\sqrt{1+\cos e_{i,\theta}}.\label{eq:etheta}
\end{align}
The system has a triangular structure where $\vd[i,c]{e}$ depends on $\ve[i,c]{e}$ and $e_{i,\theta}$, whereas $\dot{e}_{i,\theta}$ depends on $e_{i,\theta}$ but not on  $\ve[i,c]{e}$. We leverage this structure to prove a stability result for our version of the algorithm \cite{ganganath2016distributed}. First we need some theory.

\begin{lemma}[Vidyasagar \cite{vidyasagar1980decomposition}]\label{th:triangular}
The origin $(\ve{0},\ve{0})$ of a system 
\begin{align}
\vd{x}=\ve{f}(\ve{x},\ve{y}),\quad
\vd{y}=\ve{g}(\ve{y}),\label{eq:triangular}
\end{align}
is asymptotically stable if and only if $\ve{0}$ and $\ve{0}$ are asymptotically stable equilibria of $\vd{x}=\ve{f}(\ve{x},\ve{0})$ and $\vd{y}=\ve{g}(\ve{y})$. 
\end{lemma}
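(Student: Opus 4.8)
The plan is to treat the ``only if'' and ``if'' directions separately, the converse being the substantive one. For \textbf{necessity}, I would use that the slice $\{\ve{y}=\ve{0}\}$ is invariant for \eqref{eq:triangular} (since $\ve{g}(\ve{0})=\ve{0}$, which is built into the equilibrium hypothesis), so that the flow of \eqref{eq:triangular} restricted to this slice is exactly $\vd{x}=\ve{f}(\ve{x},\ve{0})$; asymptotic stability of the full origin therefore restricts to asymptotic stability of $\ve{0}$ for $\vd{x}=\ve{f}(\ve{x},\ve{0})$. For the $\ve{y}$-subsystem, the $\ve{y}$-component of every solution of \eqref{eq:triangular} solves $\vd{y}=\ve{g}(\ve{y})$ regardless of $\ve{x}$, so taking initial data $(\ve{0},\ve[0]{y})$ and invoking Lyapunov stability, respectively local attractivity, of the full origin transfers these properties verbatim to $\ve{0}$ of $\vd{y}=\ve{g}(\ve{y})$.

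For \textbf{sufficiency} I would argue by converse Lyapunov theory and a cascade/perturbation estimate, in two steps. First, Lyapunov stability of $(\ve{0},\ve{0})$: a locally asymptotically stable equilibrium of an autonomous system admits a smooth converse Lyapunov function $V(\ve{x})$ on a neighborhood of $\ve{0}$ and is hence \emph{totally stable}, i.e.\ stable under sufficiently small persistent perturbations. Writing $\vd{x}=\ve{f}(\ve{x},\ve{0})+\bigl(\ve{f}(\ve{x},\ve{y})-\ve{f}(\ve{x},\ve{0})\bigr)$ and bounding the bracketed term in terms of $\|\ve{y}\|$ via continuity of $\ve{f}$, stability of $\vd{y}=\ve{g}(\ve{y})$ lets me keep $\|\ve{y}(t)\|$ arbitrarily small for small $\|\ve[0]{y}\|$, and total stability then keeps $\|\ve{x}(t)\|$ small; so $(\ve{0},\ve{0})$ is Lyapunov stable.

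Second, local attractivity: for initial data near the origin, $\ve{y}(t)\to\ve{0}$ by hypothesis, and $\ve{x}(t)$ stays in the neighborhood where $V$ is defined by the previous step. Differentiating $V$ along the full $\ve{x}$-dynamics gives an estimate of the form $\dot V\le -W_3(\ve{x})+c\,\|\ve{y}(t)\|$ with $W_3$ positive definite; since $\|\ve{y}(t)\|\to 0$, for every level $\eta>0$ one has $\dot V<0$ whenever $V(\ve{x}(t))\ge\eta$ and $t$ is large, which drives $V(\ve{x}(t))$ below every such $\eta$, hence $V(\ve{x}(t))\to 0$ and $\ve{x}(t)\to\ve{0}$. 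Combined with $\ve{y}(t)\to\ve{0}$ this gives local attractivity, completing asymptotic stability of the cascade.

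The hard part is this sufficiency direction: the Lyapunov functions of the two subsystems cannot simply be added, and the usual exponential-stability shortcuts are unavailable because only asymptotic stability is assumed, so one genuinely needs a converse Lyapunov function together with the total-stability and vanishing-perturbation estimates above. One must also keep everything local --- the corresponding global statement is false (peaking phenomenon) --- and should flag the standing regularity assumptions ($\ve{f},\ve{g}$ locally Lipschitz, $\ve{f}(\ve{0},\ve{0})=\ve{0}$, $\ve{g}(\ve{0})=\ve{0}$) needed for well-posedness and for the converse Lyapunov theorem.
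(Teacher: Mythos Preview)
The paper does not prove this lemma; it is stated with attribution to Vidyasagar \cite{vidyasagar1980decomposition} and used as a black box in the proof of Theorem~\ref{th:main}. There is therefore no in-paper argument to compare your proposal against.

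On its own merits, your sketch is correct and is essentially the standard route (and close to Vidyasagar's original argument). The necessity direction via invariance of the slice $\{\ve{y}=\ve{0}\}$ and the triangular decoupling of the $\ve{y}$-equation is exactly right. For sufficiency, combining a converse Lyapunov function for $\vd{x}=\ve{f}(\ve{x},\ve{0})$ with total stability for Lyapunov stability, and then a vanishing-perturbation estimate for attractivity, is the accepted approach when only asymptotic (not exponential) stability is assumed. One small point worth making explicit when you write it out: to obtain the inequality $\dot V\le -W_3(\ve{x})+c\,\|\ve{y}\|$ you use that $\nabla V$ is bounded on the relevant compact neighborhood (from smoothness of the converse Lyapunov function) together with local Lipschitz continuity of $\ve{f}$ in $\ve{y}$; this is implicit in your ``continuity of $\ve{f}$'' remark but deserves a line. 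Your caveats about locality (the global analogue fails due to peaking) and about the standing regularity hypotheses are appropriate and should be retained.
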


The notion of almost global stability of an equilibrium corresponds to the idea of stability and convergence from all initial conditions except for a set that is negligible in size.

\begin{definition}
An equilibrium $\ve{y}\in\mathcal{X}$ of a dynamical system $\vd{x}=\ve{f}(\ve{x})$ on a space $\mathcal{X}$ is \emph{almost globally asymptotically stable} if it is stable and $\lim_{t\rightarrow\infty}\ve{x}(t;\ve[0]{x})=\ve{y}$ for all $\ve[0]{x}\in\mathcal{X}\backslash\mathcal{N}$, where $\mathcal{N}$ is a set of Lebesgue measure zero.
\end{definition}

What follows is our main theoretical result:

\begin{thm}\label{th:main}
The origin $(\ve{0},0)$ is an almost globally asymptotically stable equilibrium of the error dynamics \eqref{eq:errors}--\eqref{eq:etheta}.
\end{thm}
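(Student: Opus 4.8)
\emph{Proof idea.} The plan is to exploit the cascade (triangular) structure of \eqref{eq:errors}--\eqref{eq:etheta} that is already highlighted after Lemma~\ref{th:triangular}, with $\ve{x}=\ve[i,c]{e}$ playing the role of the driven state and $\ve{y}=e_{i,\theta}$ the driving state. Lemma~\ref{th:triangular} reduces \emph{local} asymptotic stability of $(\mathbf 0,0)$ to asymptotic stability of the two decoupled subsystems, but it gives no information about basins of attraction; so the ``almost global'' part of the claim will need an additional convergence argument built on top of the local result, together with an a priori bound that rules out a peaking phenomenon.

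First I would analyse the scalar heading subsystem \eqref{eq:etheta}. Using the half-angle identities $\sin e = 2\sin(e/2)\cos(e/2)$ and $1+\cos e = 2\cos^2(e/2)$, its right-hand side simplifies to $-\sqrt2\,k_\theta\sin(e_{i,\theta}/2)$ on $(-\pi,\pi)$, which is smooth there, vanishes only at $e_{i,\theta}=0$, and has the sign of $-e_{i,\theta}$; hence $e_{i,\theta}=0$ is locally exponentially stable and globally (monotonically) attractive from every $e_{i,\theta}(0)\in(-\pi,\pi)$, the lone exceptional point being the discontinuous, unstable equilibrium $e_{i,\theta}=\pi$ discussed after \eqref{eq:utheta}. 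Setting $e_{i,\theta}=0$ in \eqref{eq:errors} and using $\sech 0=1$, $\sin 0=0$ collapses the camera subsystem to $\vd[i,c]{e}=-k_c\tanh(k_t\|\ve[i,c]{e}\|)\,\ve[i,c]{e}/\|\ve[i,c]{e}\|$, which extends continuously through $\ve[i,c]{e}=\mathbf 0$ (behaving like $-k_ck_t\ve[i,c]{e}$ there); with $V=\tfrac12\|\ve[i,c]{e}\|^2$ one gets $\dot V=-k_c\tanh(k_t\|\ve[i,c]{e}\|)\|\ve[i,c]{e}\|<0$ for $\ve[i,c]{e}\neq\mathbf 0$ and $V$ is radially unbounded, so $\ve[i,c]{e}=\mathbf 0$ is globally asymptotically stable for the frozen subsystem. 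Lemma~\ref{th:triangular} (applied locally around the smooth equilibrium) then gives that $(\mathbf 0,0)$ is asymptotically stable for the full system.

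It remains to prove convergence to $(\mathbf 0,0)$ from every initial condition with $e_{i,\theta}(0)\neq\pi$, for which the crucial ingredient is an a priori bound on $\ve[i,c]{e}$ excluding any peaking. Here I would use that the second summand of \eqref{eq:errors} is exactly $-\sqrt2\,k_\theta r_c\sin(e_{i,\theta}/2)\,\ve[i,2]{v}$, i.e.\ the \emph{unit} vector $\ve[i,2]{v}$ scaled by a scalar (since $\theta_{i,d}-e_{i,\theta}=\theta_i$), while the first summand contributes nonpositively to $\tfrac{d}{dt}\|\ve[i,c]{e}\|$; this yields $\tfrac{d}{dt}\|\ve[i,c]{e}\|\le\sqrt2\,k_\theta r_c\,|\sin(e_{i,\theta}/2)|$ in the Dini sense. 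Since along the flow $|\dot e_{i,\theta}|=\sqrt2\,k_\theta\,|\sin(e_{i,\theta}/2)|$ and $e_{i,\theta}(\cdot)$ is monotone, $\int_0^\infty|\sin(e_{i,\theta}(s)/2)|\,ds=|e_{i,\theta}(0)|/(\sqrt2\,k_\theta)\le\pi/(\sqrt2\,k_\theta)$, whence $\|\ve[i,c]{e}(t)\|\le\|\ve[i,c]{e}(0)\|+\pi r_c$ for all $t\ge0$ (geometrically: the rotational drift displaces $\ve[i,c]{x}$ by at most $r_c$ times the total heading variation).

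With $\ve[i,c]{e}$ bounded and $e_{i,\theta}(t)\to0$, a standard converging-input/converging-state (cascade) argument finishes the proof. Integrating $\dot V$ along the true trajectory and absorbing the cross term via the bound of the previous paragraph gives $\int_0^\infty k_c\sech^2(k_se_{i,\theta})\tanh(k_t\|\ve[i,c]{e}\|)\|\ve[i,c]{e}\|\,dt<\infty$; since $\sech^2(k_se_{i,\theta}(t))\to1$ and $t\mapsto\tanh(k_t\|\ve[i,c]{e}(t)\|)\|\ve[i,c]{e}(t)\|$ is uniformly continuous (its derivative is bounded because $\|\vd[i,c]{e}\|\le k_c+\sqrt2\,k_\theta r_c$), Barbalat's lemma forces $\ve[i,c]{e}(t)\to\mathbf 0$. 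As $\{e_{i,\theta}=\pi\}$ is a set of Lebesgue measure zero in $\R^2\times(-\pi,\pi]$, combining this with the local stability above yields almost global asymptotic stability. I expect the main obstacle to be exactly this last part: Lemma~\ref{th:triangular} is purely local, so the real content is the no-peaking bound together with the cascade convergence argument, and some care is additionally needed at the discontinuity $e_{i,\theta}=\pi$ of the heading control and at the removable singularities of the vector fields at $\ve[i,c]{e}=\mathbf 0$.
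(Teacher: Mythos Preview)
Your argument is correct and opens exactly as the paper does: local asymptotic stability of $(\mathbf 0,0)$ via Lemma~\ref{th:triangular} applied to the two decoupled subsystems, with the exceptional set $\{e_{i,\theta}=\pi\}$ identified for the heading equation. The route to \emph{almost global} attractiveness, however, is genuinely different. The paper works with $V=\|\ve[i,c]{e}\|/\sqrt{2}$, bounds $\dot V$ by Cauchy--Schwarz, and then invokes the comparison principle: for every small $\delta>0$ there is a time $T(\delta)$ after which $\dot V\le-\tfrac{k_c}{2}\tanh(k_tV)+\tfrac{k_\theta r_c}{2\sqrt2}\,\delta$, so $V$ is dominated by the solution $W$ of the corresponding scalar ODE, whose equilibrium $\tfrac{1}{k_t}\mathrm{atanh}\bigl(\tfrac{k_\theta r_c\delta}{\sqrt2\,k_c}\bigr)$ vanishes as $\delta\to0$. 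You instead exploit the half-angle identity to rewrite the rotational drift as $-\sqrt2\,k_\theta r_c\sin(e_{i,\theta}/2)\,\ve[i,2]{v}$, integrate it against the monotone heading trajectory to obtain the explicit no-peaking bound $\|\ve[i,c]{e}(t)\|\le\|\ve[i,c]{e}(0)\|+\pi r_c$, and then finish with a Barbalat argument on the dissipation integral. What each buys: the paper's comparison-principle template is textbook and avoids Barbalat, but leaves the transient before $T(\delta)$ implicit and requires handling the auxiliary ODE; your approach yields a clean, geometrically interpretable peaking bound (camera drift $\le r_c\times$ total heading variation) and makes the boundedness step explicit, at the price of the uniform-continuity check needed for Barbalat. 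Both are sound proofs of Theorem~\ref{th:main}.
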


\begin{proof}
The error dynamics \eqref{eq:errors}--\eqref{eq:etheta} are on the triangular form \eqref{eq:triangular}. Moreover, the  two decoupled subsystems,
\begin{align*}
\vd[i,c]{e}|_{e_\theta=0}&=-k_{c}\tanh(k_t\|\ve[i,c]{e}\|)\,\ve[i,c]{e}/\|\ve[i,c]{e}\|,\\
\dot{e}_{i,\theta}&=-k_\theta\sin e_{i,\theta}/\sqrt{1+\cos e_{i,\theta}},
\end{align*}
both have an asymptotically stable equilibrium in $\ve{0}$ and $0$ respectively. By Lemma  \ref{th:triangular}, $(\ve{0},0)$ is an asymptotically stable equilibrium of \eqref{eq:errors}--\eqref{eq:etheta}. In particular, $(\ve{0},0)$ is stable.

It remains to prove almost global attractiveness of $(\ve{0},0)$. We will use the comparison principle \cite{khalil} for a Lyapunov function based on $\ve[i,c]{e}$. It is straightforward to show that $e_{i,\theta}$ converges to $0$ for all initial conditions except $\mathcal{N}:=\{\pi\}$, \ie from almost all initial conditions. Note that  Equation \eqref{eq:etheta} can be solved for the state $e_{i,\theta}$ as a function of time. This allows us to substitute $e_{i,\theta}(t)$ into \eqref{eq:errors} and consider the evolution of $\ve[i,c]{e}$ as a non-autonomous system on its own.

Introduce a candidate Lyapunov function $U:=\frac12\|\ve[i,c]{e}\|^2>0$ which satisfies
\begin{align*}
\dot{U}=&{}-k_{c}\,{\sech}^2(k_{s}e_{i,\theta})\tanh(k_t\|\ve[i,c]{e}\|)\|\ve[i,c]{e}\|+\nonumber\\
{}&-\frac{k_\theta r_c}{2}\frac{\sin e_{i,\theta}}{\sqrt{1+\cos e_{i,\theta}}}\left\langle\ve[i,c]{e},\begin{bmatrix}
-\sin (\theta_{i,d}-e_{i,\theta})\\
\cos (\theta_{i,d}-e_{i,\theta})
\end{bmatrix}\right\rangle.
\end{align*}
Let $V=\sqrt{U}=\|\ve[i,c]{e}\|/\sqrt{2}$, yielding $\dot{V}=\dot{U}/(2V)$, \ie
\begin{align*}
 \dot{V}=&{}-\frac{k_{c}}2\,{\sech}^2(k_{s}e_{i,\theta})\tanh(k_t V)\nonumber\\
{}&-\frac{k_\theta r_c}4\frac{\sin e_{i,\theta}}{\sqrt{1+\cos e_{i,\theta}}}\left\langle\frac{\ve[i,c]{e}}{V},\begin{bmatrix}
-\sin (\theta_{i,d}-e_{i,\theta})\\
\cos (\theta_{i,d}-e_{i,\theta})
\end{bmatrix}\right\rangle\\
\leq&{}-\frac{k_{c}}2\,{\sech}^2(k_{s}e_{i,\theta})\tanh(k_t V)+\frac{k_\theta r_c}{2\sqrt{2}}\frac{\sin |e_{i,\theta}|}{\sqrt{1+\cos e_{i,\theta}}},
\end{align*}
where we used the Cauchy-Schwarz inequality.

By definition of $e_{i,\theta}\rightarrow0$ as $t\rightarrow\infty$, for any $\delta>0$ there is a time $T$ such that $|e_{i,\theta}(t)|\leq \delta$ for all $t\geq T$. In particular, for any small $\delta\in(0,1)$ there is a $T(\delta)$ such that
\begin{align*}
\dot{V}\leq-\frac{k_{c}}2\,\tanh(k_t V)+\frac{k_\theta r_c}{2\sqrt{2}}\delta,
\end{align*}
for all $t\geq T(\delta)$. Introduce a variable $W$ given by
\begin{align*}
\dot{W}:=-\frac{k_{c}}2\,\tanh(k_t W)+\frac{k_\theta r_c}{2\sqrt{2}}\delta,
\end{align*}
Let $V(0)\leq W(0;\delta)$, then $V(t)\leq W(t;\delta)$ by the comparison principle \cite{khalil}. For small $\delta$, it can be shown that
\begin{align*}
\lim_{t\rightarrow\infty}W(t;\delta)=\frac{1}{k_t}{\mathrm{atanh}}\,\frac{k_\theta r_c\delta}{\sqrt{2}k_c}.
\end{align*}
Note that $\lim_{\delta\rightarrow0} W(t;\delta)=0$. It follows that for sufficiently large $t$, $V(t)$ is bounded above by an arbitrarily small number and bounded below by $0$. Hence $\lim_{t\rightarrow\infty}V(t)=0$.\end{proof}

\begin{remark}As desired, $(\ve[i,c,d]{x},\theta_d)$ is an almost globally asymptotically stable equilibrium of the UAV system
\begin{align*}
\vd[i,c]{x}=\ve[i,c]{u},\quad\dot{\theta}_i=u_{i,\theta},
\end{align*}
in the case of a constant $\ve[i,c,d]{x}$. This means that the dynamics \eqref{eq:errors}--\eqref{eq:etheta} and our preceding analysis concerns only one maneuver from $(\ve[i,c]{x},\theta_i)$ to $(\ve[i,c,d]{x},\theta_{i,d})$. Note that in practice, we have non-constant $(\ve[i,c,d]{x},\theta_{i,d})$ and require $\ve[i,c]{x}$ to converge to a set, the camera field of view ${\mathcal{V}}_i$, rather than a point $\ve[i,c,d]{x}$. The field of view ${\mathcal V}_i$ will contain the desired equilibrium $\ve[i,c,d]{x}$ in finite time. Then, as $\ve[i,c,d]{x}$ is updated, ${\mathcal{V}}_i$ will move towards a new set containing the next equilibrium.
\end{remark}

\subsection{Nano-UAV swarm experiment}

The combined SLAM, ICP, and exploration algorithms are implemented on a swarm of three Crazyflie 2.1 nano-UAVs. The environment that the swarm operates consists of two adjacent rooms. The UAVs start out close to each other, aligned with a nearby wall in the environment. The nearby wall provides a source of LRF sensor readings that the SLAM and ICP algorithms can exploit for initial synchronization of the three SLAM maps. The UAVs are stabilized to different heights as to not interfere with each other's LRF.


The SLAM map and trajectories taken by the UAVs are displayed in Fig. \ref{fig:slam}. The UAVs start out close to each other on a line and are by default located in the center of their maps. After the initial SLAM and ICP phases, the UAVs start to explore the environment. The local view that each UAV has of its environment is displayed in the map. The UAV detects the walls of the environment and explores both its inside and  boundaries.  Two of the UAVs stay in the first room. The third UAV discovers the adjacent second room and moves into it for exploration. 

The camera coverage map $\ve[i]{m}$ with the timestamped field of view ${\mathcal{V}}_i$  of the three UAVs is displayed in Fig. \ref{fig:timestamp}. The three timestamp maps are different since the belief that each UAV has about the positions of the other members of the swarm differs somewhat, see Fig. \ref{fig:slam}. Fig. \ref{fig:timestamp} also displays the total coverage obtained as the percentage of all strictly positive map elements $(\ve[i]{m})_{ab}>0$. The increase in coverage is approximately linear. Note that the camera coverage maps does not account for walls in the environment.

\begin{figure}[htp!]
\centering
\includegraphics[width=1\linewidth]{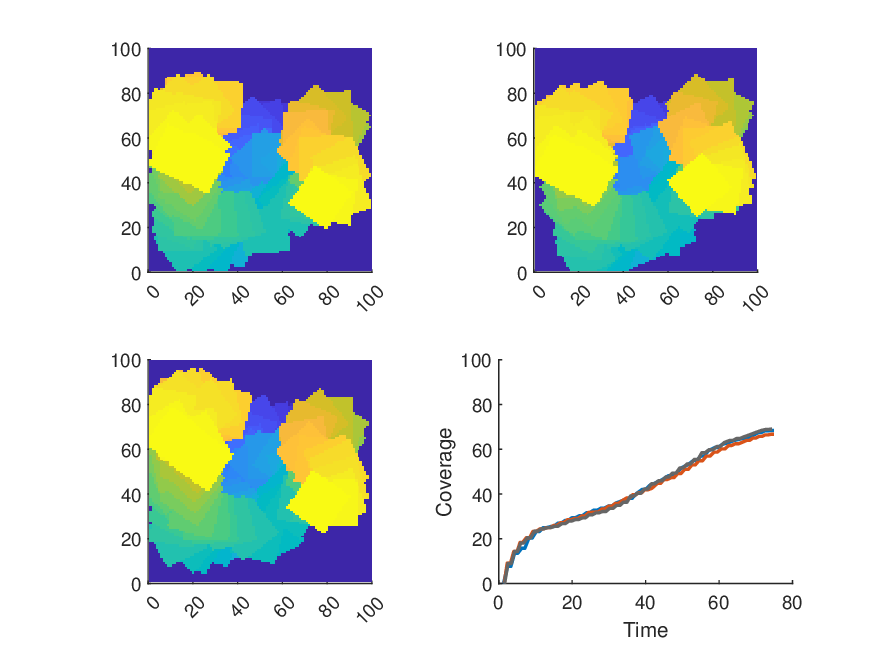}
 \caption{The camera coverage maps $\ve[i]{m}$ for the three UAVs. The total coverage of $\ve[i]{m}$ as a function of time (bottom right). The coverage is similar for each UAV, the colors of the three curves are the same as those in Fig. \ref{fig:slam}.}
\label{fig:timestamp}
\end{figure}

\section{Conclusions}

This paper is a proof of concept study that demonstrates SLAM and exploration in a swarm of three nano-UAVs. In particular, SLAM has not been implemented onboard the Crazyflie 2.1 platform previously, let alone in a swarm. We couple the SLAM algorithm with an exploration algorithm that has been modified for implementation on the limited hardware of the experimental platform. The two algorithms synergize; the SLAM algorithm maps out new areas and the exploration algorithm moves the UAV swarm into these areas from where they can map out further areas. The agents interact and solve the exploration task in a cooperative fashion. We also prove an almost global stability result for the our modified UAV position and orientation control algorithm.

\bibliographystyle{ieeetr} 
\bibliography{mybib.bib}

\end{document}